\LetLtxMacro{\oldtextsc}{\textsc}
\renewcommand{\textsc}[1]{\oldtextsc{\scalefont{1.10}#1}}
\colorlet{my-red}{BrickRed!90!Sepia}
\colorlet{my-blue}{Aquamarine!30!Blue}
\Crefname{equation}{}{}
\Crefname{figure}{Fig.}{Figs.}
\Crefname{tabular}{Tab.}{Tables}
\Crefname{table}{Tab.}{Tables}
\Crefname{section}{Sec.}{Sections}
\DeclareMathOperator*{\argmax}{arg~max}
\newcommand{\kld}[2]{D_{\text{KL}}\left(#1 \mid \mid #2\right)}
\newcommand{\subalign}[1]{%
  \vcenter{%
    \Let@ \restore@math@cr \default@tag
    \baselineskip\fontdimen10 \scriptfont\tw@
    \advance\baselineskip\fontdimen12 \scriptfont\tw@
    \lineskip\thr@@\fontdimen8 \scriptfont\thr@@
    \lineskiplimit\lineskip
    \ialign{\hfil$\m@th\scriptstyle##$&$\m@th\scriptstyle{}##$\hfil\crcr
      #1\crcr
    }%
  }%
}
\newtheorem{definition}{Definition}
\newtheorem{proposition}{Proposition}
\newcommand{\Real}{\mathbb R}
\newcommand{\Prob}{\mathbb P}
\newcommand{\E}{\mathbb E}
\newcommand{\calD}{\mathcal{D}}
\newcommand{\calF}{\mathcal{F}}
\newcommand{\calL}{\mathcal{L}}
\newcommand{\calX}{\mathcal{X}}
\newcommand{\calY}{\mathcal{Y}}
\DeclareMathSymbol{\Rho}{\mathalpha}{operators}{"50}
\renewrobustcmd{\bfseries}{\fontseries{b}\selectfont}
\newrobustcmd{\BF}{\bfseries}
\newcommand{\msdn}[2]{{#1} $\pm$ #2}
\titlespacing\section{0pt}{0pt plus 1pt minus 1pt}{0pt plus 1pt minus 1pt}
\title{
Diversify and Disambiguate: \\
{\Large Out-of-Distribution Robustness via Disagreement}
}
\author{%
  Yoonho Lee\thanks{Email: \href{mailto:yoonho@cs.stanford.edu}{yoonho@cs.stanford.edu}. Code is available at \href{https://github.com/yoonholee/DivDis}{https://github.com/yoonholee/DivDis}.} \\
  Stanford University \\
  \And
  Huaxiu Yao \\
  Stanford University \\
  \And
  Chelsea Finn \\
  Stanford University \\
}
\newcommand{\Ds}{\calD_\textrm{S}}
\newcommand{\Dt}{\calD_\textrm{T}}
\newcommand{\psxy}{p_\textrm{S}(x, y)}
\newcommand{\ptxy}{p_\textrm{T}(x, y)}
\newcommand{\Fstar}{\calF^{\bm{\varepsilon}}}
\newcommand{\Fstard}{\calF^{\calD\bm{\varepsilon}}}
\newcommand{\simD}{\underset{\calD}{\sim}}
\newcommand{\Div}{\textsc{Diversify} }
\newcommand{\Dis}{\textsc{Disambiguate} }
\newcommand{\DivDis}{DivDis }
\newcommand{\new}[1]{{\color{blue!80!black}{#1}}}
\begin{document}

\maketitle

\begin{abstract}

Real-world machine learning problems often exhibit shifts between the source and target distributions, in which source data does not fully convey the desired behavior on target inputs.
Different functions that achieve near-perfect source accuracy can make differing predictions on test inputs, and such ambiguity makes robustness to distribution shifts challenging.
We propose DivDis, a simple two-stage framework for identifying and resolving ambiguity in data.
DivDis first learns a diverse set of hypotheses that achieve low source loss but make differing predictions on target inputs.
We then disambiguate by selecting one of the discovered functions using additional information, for example, a small number of target labels.
Our experimental evaluation shows improved performance in subpopulation shift and domain generalization settings, demonstrating that DivDis can scalably adapt to distribution shifts in image and text classification benchmarks.

\end{abstract}

\section{Introduction} \label{sec:intro}

Datasets are often \textit{underspecified}: 
multiple plausible hypotheses each describe the data equally well \citep{d2020underspecification}, and the data offers no further evidence to prefer one over another.
Despite such ambiguity, machine learning models typically choose only one of the possible explanations of given data.
Such choices can be suboptimal, causing these models to fail when the data distribution is shifted, as common in real-world applications.
For example, examination of a chest X-ray dataset \citep{oakden2020hidden} has shown that many images of patients with pneumothorax include a thin drain used for treating the disease.
A standard classifier trained on this dataset can erroneously identify such drains as a predictive feature of the disease, exhibiting degraded accuracy on the intended distribution of patients not yet being treated.
To not suffer from such failures, it is desirable to have a model that can discover a diverse collection of alternate plausible hypotheses.

The standard empirical risk minimization \citep[ERM]{vapnik1992principles} paradigm performs poorly on underspecified data, because ERM tends to select the solution based on the most salient features without considering alternatives \citep{geirhos2020shortcut,shah2020pitfalls,scimeca2021shortcut}.
This simplicity bias occurs even when training an ensemble \citep{hansen1990neural,lakshminarayanan2016simple} because each model is still biased towards simple functions.
While many recent methods \citep{ganin2016domain,sagawa2019distributionally,liu2021just} improve robustness in distribution shift settings, we find that they fail on data with more severe underspecification.
This is because, similarly to ERM, these methods only consider a single solution even in situations where multiple explanations exist.

We propose Diversify and Disambiguate (DivDis), a two-stage framework for learning from underspecified data.
Our key idea is to learn a collection of diverse functions that are consistent with the training data but make differing predictions on unlabeled test datapoints.
DivDis operates as follows. 
We train a neural network consisting of a shared backbone feature extractor with multiple heads, each representing a different function.
As in regular training, each head is trained to predict labels for training data, but the heads are additionally encouraged to represent different functions from each other.
More specifically, the heads are trained to make disagreeing predictions on a separate unlabeled dataset from the test distribution, a setting close to transductive learning.
At test time, we select one member of the diversified functions by querying labels for the datapoints most informative for disambiguation.
We visually summarize this framework in \cref{fig:concept}.
DivDis is designed for scenarios with underspecified data and distribution shift, and its heads will not yield a set of diverse functions in settings where only one function can achieve low training loss.

We evaluate DivDis in several settings in which underspecification limits the performance of prior methods, such as standard subpopulation shift benchmarks \citep{sagawa2019distributionally} or the large-scale CXR and Camelyon17 datasets \citep{wang2017chestx,sagawa2022extending}.
DivDis achieves an over $15\%$ improvement in worst-group accuracy on the Waterbirds task when tuning hyperparameters without any spurious attribute annotations, and outperforms existing semi-supervised methods on the Camelyon17 task.
We also consider challenging problem settings in which labels are \textit{completely correlated} with spurious attributes, so a classifier based on the spurious feature can achieve zero loss.
In these completely correlated settings, our experiments find that DivDis is substantially more sample-efficient: DivDis with $4$ target domain labels outperforms two fine-tuning methods that use $128$ labels.

\section{Learning From Underspecified Data} \label{sec:problem}
\begin{figure*}[t]
\vspace{-30pt}
\centering \includegraphics[width=0.95\linewidth]{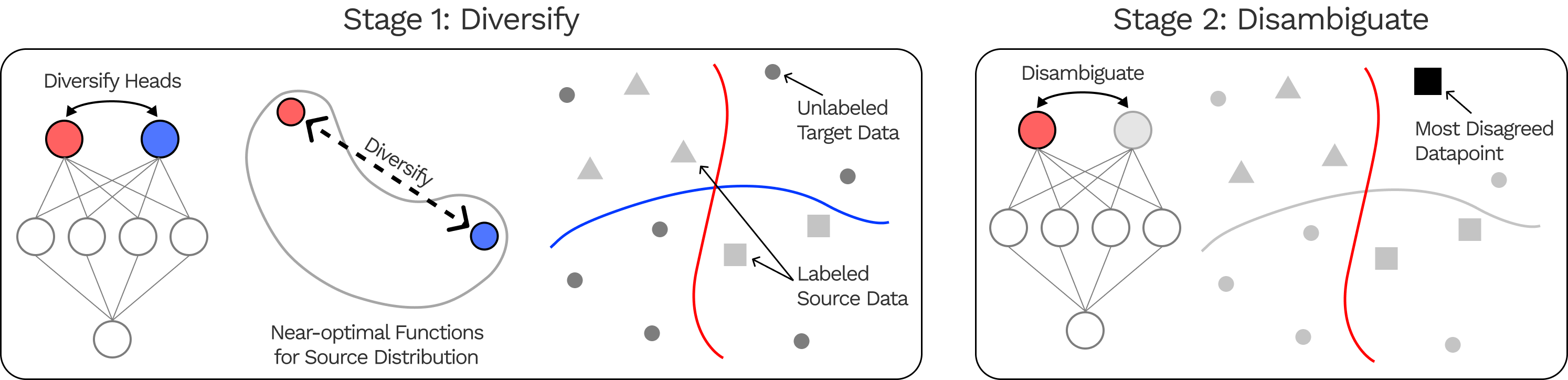}
\caption{\label{fig:concept} 
    \footnotesize
    Our two-stage framework for learning from underspecified data.
    In the \Div stage, we train each head in a multi-headed neural network to accurately predict the labels of source data while also outputting differing predictions for unlabeled target data.
    In the \Dis stage, we choose one of the heads by observing labels for an informative subset of the target data.
}
\vspace{-10pt}
\end{figure*}

We consider a supervised learning setting in which we train a model $f$ that takes input $x \in \calX$ and predicts its corresponding label $y \in \calY$.
We train $f$ with a labeled dataset $\Ds = \{(x_1, y_1), (x_2, y_2), \ldots\}$ drawn from data distribution $\psxy$.
The model $f$ is selected from hypothesis class $f \in \calF$ by approximately minimizing the predictive risk $\mathbb{E}_{\psxy}[\ell(f(x), y)]$ on the data distribution.
The model $f$ is evaluated via its predictive risk on held-out samples from $\psxy$.
Standard procedures such as regularization and cross-validation encourage such generalization.

However, even if a function $f$ generalizes to unseen data sampled from the same distribution $\psxy$, performance often deteriorates in distribution shift conditions,
when we evaluate on \textit{target data} sampled from a different distribution $\ptxy$.
In many distribution shift scenarios~\citep{koh2021wilds}, the overall data distribution can be modeled as a mixture of domains,
where each domain $d \in \mathbb{D}$ corresponds to a fixed data distribution $p_d(x, y)$.
In this paper, we specifically consider a subpopulation shift setting, where the source and target distributions are different mixtures of the same underlying domains:
$p_\textrm{S} = \sum_{d \in \mathbb{D}} w^S_d p_d$ and 
$p_\textrm{T} = \sum_{d \in \mathbb{D}} w^T_d p_d$, where $\{w^S_d\}_{d\in \mathbb{D}}\neq \{w^T_d\}_{d\in \mathbb{D}}$.

Conditions like subpopulation shift can be inherently underspecified because the generative process underlying the data distribution, i.e. the domains and coefficients, has so many possibilities.
We formalize this intuition through a notion of near-optimal sets of hypotheses.
We define the $\varepsilon$-optimal set for a data distribution as follows:
\begin{definition}[\bfseries $\boldsymbol{\varepsilon}$-optimal set]
\label{def:near_optimal}
Let $p(x, y)$ be a distribution over $\calX \times \calY$, 
and $\calF$ a set of predictors $f: \calX \rightarrow \calY$.
Let $\calL_p: \calF \rightarrow \Real$ be the risk with respect to $p(x, y)$.
The $\varepsilon$-optimal set with respect to $\calF$ at level $\varepsilon \geq 0$ is defined as
\begin{align}
    \Fstar = \{ f \in \calF | \calL_p(f) \leq \varepsilon \}.
\end{align}
\end{definition}
Put differently, the $\varepsilon$-optimal set consists of all functions that generalize within the distribution $p(x, y)$.
The constant $\varepsilon$ controls the degree of generalization, and we consider small $\varepsilon$ here onwards.

Note that a model's predictions on samples from $\psxy$---whether $\Ds$ or a held-out validation set---cannot be used to distinguish between different near-optimal functions with respect to $\psxy$.
This is because by definition, the predictions of any two models $f_1, f_2 \in \Fstar$ are nearly identical on $\psxy$ for small $\varepsilon$.
Based on source data alone, we have insufficient reason to prefer any member of $\Fstar$ over another.
Our state of belief should therefore cover $\Fstar$ as comprehensively as possible, putting nonzero weight on many functions that embody different inductive biases.
This reasoning is consistent with existing principles for reasoning under uncertainty such as the maximum entropy principle \citep{keynes1921treatise,jaynes1957information}:
in the absence of complete knowledge, one's beliefs should be appropriately spread across all possibilities that are consistent with the available information.

Unfortunately, these principles are notably difficult to implement in practice because of the size and dimensionality of $\Fstar$. We thus incorporate a mild additional assumption into our problem statement that substantially reduces the set of solutions to consider.
We use an \emph{unlabeled target dataset} $\Dt$ sampled from $p_\textrm{T}(x)$.
The functions inside $\Fstar$ can be compared based on how their predictions differ on $\Dt$.
This simplifies the initial infinite-dimensional problem of comparing functions to one of comparing finite sets of predictions, significantly reducing the scope of search.
The target set $\Dt$ can also be seen as specifying the directions of functional variation that are most important to us.

We formalize the underspecification with respect to predictions on unlabeled target data through the following variant of $\varepsilon$-optimal sets:
\begin{definition}[\bfseries $\calD\boldsymbol{\varepsilon}$-optimal set]
\label{def:D_e_optimal set}
Let $\calD = \{ x_1, \ldots, x_n \}$ be an unlabeled dataset, $p(x, y)$ a data distribution, and $\calF$ a hypothesis class.
Let $\Fstar$ be the $\varepsilon$-optimal set with respect to $p(x, y)$ and $\calF$.
The $\calD\varepsilon$-optimal set $\Fstard$ is the set of equivalence classes of $\Fstar$ defined by the following relation $\simD$ between two functions:
\begin{align}
    f \simD g \quad \text{if} \quad f(x) = g(x) \quad \forall x \in \calD.
\end{align}
\end{definition}
Compared to $\Fstar$, the dataset-dependent set $\Fstard$ is typically much smaller and easier to manipulate, since it is defined through predictions ($\in \calY$) rather than raw functions.

\textbf{Problem statement.}
To summarize, we use a labeled \textit{source dataset} $\Ds = \{(x_1, y_1), \ldots\}$ along with an unlabeled \textit{target dataset} $\Dt = \{x_1^t, \ldots\}$.
We assume there is subpopulation shift between the two datasets, and our goal is to find a function that performs well in the target distribution.
Such a function will lie inside the near-optimal set $\Fstar$ of $\psxy$, and the learner leverages $\Dt$ to find diverse functions inside this set.

\section{Diversify and Disambiguate} \label{sec:dive}
We now describe Diversify and Disambiguate (DivDis), a two-stage framework for learning from underspecified data.
We first describe the general framework (\cref{subsec:divdis_overview}), and then our specific implementation of the two \Div (\cref{subsec:diversify}) and \Dis (\cref{subsec:disambiguate}) stages.

\subsection{General Framework}
\label{subsec:divdis_overview}

\begin{wrapfigure}{R}{0.51\textwidth}
\begin{minipage}{\linewidth}
\vspace{-25pt}
\input{tables/pseudocode.tex}
\vspace{-20pt}
\end{minipage}
\end{wrapfigure}
As a running example to motivate our algorithm, consider an underspecified cow-camel image classifiation task in which the source data includes images of cows with grass backgrounds and camels with sand backgrounds.
We can imagine two completely different classifiers each achieving perfect accuracy in the source distribution: one that classifies by animal and the other by background.
After identifying both possible functions, we can resolve the ambiguity by observing the label of a single image of a cow in the desert.

With this motivation, DivDis aims to first find a set of diverse functions and then choose the best member of this set with minimal supervision.
The DivDis framework consists of two stages. 
In the first stage, we \Div by training a finite set of functions that together approximate the $\calD\varepsilon$-optimal set for target data $\Dt$.
This stage uses both the source and target datasets for training.
The source data ensures that all functions achieve low predictive loss on $\psxy$, 
while the target data reveals whether or not the functions rely on different predictive features.
In the second stage, we \Dis by choosing the best member among this set of functions, for example, by observing the label of a target datapoint for which the heads disagree on.

\subsection{\Div: Train Disagreeing Heads}
\label{subsec:diversify}

As previously described, the \Div stage learns a diverse collection of functions by comparing predictions for the target set while minimizing training error. %
In our hypothetical cow-camel task, diversifying predictions in this way will produce functions that disagree on ambiguous datapoints like cows in the desert. 
Such disagreement can cause one head to become an animal classifier and another to be a background classifier.

We represent and train multiple functions using a multi-headed neural network with $N$ heads.
For an input datapoint $x$, we denote the prediction of head $i$ as $f_i(x) = \widehat{y}_i$.
We ensure that each head achieves low predictive risk on the source domain by minimizing the cross-entropy loss for each head $\calL_\text{xent}(f_i) = \E_{x, y \sim \Ds} \left[ l(f_i(x), y) \right]$.

Ideally, each function would rely on different predictive features in the input. 
While the most straightforward solution is to enforce differing predictions
from each function, this does not necessarily indicate that the two functions rely on different features.
As an extreme example, consider a function $f$ for a binary classification problem and its ``adversary'' $\bar{f}$ which outputs the exact opposite of $f$ on the target dataset.
Even though $f$ and $\bar{f}$ disagree completely on the target distribution, they rely on the same input features.

We train each pair of heads to produce predictions that are close to being statistically independent from each other.
Independence of prediction values directly implies disagreement in predictions: for example, mutual information is maximized when two prediction heads completely agree with each other.
Concretely, we minimize the mutual information between each pair of predictions:
\begin{align}
\label{eq:mi}
    \calL_\text{MI}(f_i, f_j)
    = \kld{p(\widehat{y_i}, \widehat{y_j})}{p(\widehat{y_i}) \otimes p(\widehat{y_j})},
\end{align}
where $\kld{\cdot}{\cdot}$ is the KL divergence and $\widehat{y_i}$ is the prediction $f_i(x)$ for $x \sim \Dt$.
In practice, we optimize this quantity using empirical estimates of the distributions $p(\widehat{y_i}, \widehat{y_j})$ and $p(\widehat{y_i}) \otimes p(\widehat{y_j})$.

To prevent functions from collapsing to degenerate solutions such as predicting a single label for the entire target set while maintaining good source accuracy, 
we also minimize an optional regularization loss which regularizes the marginal prediction of each head across the target dataset:
\begin{align}
\label{eq:reg}
    \calL_\text{reg} (f_i) = \kld{p(\widehat{y_i})}{p(y)},
\end{align}
where $p(y)$ is a hyperparameter.
Our experiments in \cref{fig:ratio} show that DivDis is not very sensitive to the choice of this hyperparameter.
If we expect the label distribution of the source and target datasets to be similar, we can simply set $p(y)$ to be the label distribution in the source dataset $\Ds$.

The overall objective for the \Div stage is a weighted sum with hyperparameters $\lambda_1, \lambda_2 \in \Real$:
\begin{align}
\label{eq:objective}
    \sum\nolimits_i \calL_\text{xent}(f_i) + \lambda_1 \sum\nolimits_{i \neq j} \calL_\text{MI}(f_i, f_j) + \lambda_2 \sum\nolimits_i \calL_\text{reg}(f_i).
\end{align}
We note that the quantities needed for the mutual information term \cref{eq:mi} is easily computed in parallel across examples within a batch using modern deep learning libraries; we show a code snippet in \cref{app:sec:code}.
In practice, the cost of computing the objective \cref{eq:objective} is dominated by the cost of feeding two batches---one source and one target---to the network.
The time- and space- complexity of one step in the \Div stage is approximately $\times 2$ compared to a standard SGD step in optimizing ERM with the source data, and both can be reduced by using a smaller batch size.

\subsection{\Dis: Select the Best Head}
\label{subsec:disambiguate}

After learning a diverse set of functions that all achieve good training performance in the \Div stage, we \Dis by selecting one of the functions.
For example, once our model has learned both the animal and background classifiers for the cow-camel task, we can quickly see which is right by observing the ground-truth label of an image of a cow in the desert.
As such an example is not present in the given labeled source or unlabeled target data, this stage requires information beyond what was used in the first stage.
We now present three different strategies for head selection during the \Dis stage. 

\textbf{Active querying.}
To select the best head with a minimal amount of supervision, we propose an active querying procedure, in which the model acquires labels for the most informative subset of the unlabeled target dataset $\Dt$.
Since larger difference in predictions indicates more information for disambiguation, we sort each target datapoint $x \in \Dt$ according to the total distance between head predictions
$ \sum_{i \neq j}\left| f_i(x) - f_j(x) \right|$.
We select a small subset of the target dataset, which has the $m$ datapoints (i.e. $m \ll |\Dt|$) with the highest value of this metric. 
We measure the accuracy of each of the $N$ heads with respect to this labeled set and select the head with the highest accuracy.

\textbf{Random querying.}
A related alternative to the active querying strategy is random querying, in which we label a random subset of the target dataset $\Dt$.
Beyond its simplicity, an advantage of this procedure is that one can perform labeling in advance, because the datapoints to be labeled do not depend on the results of the \Div stage. 
However, random querying is substantially less label-efficient than active querying because the set will likely include unambiguous datapoints for which labels are less informative  for head selection.

\textbf{Disambiguation on source data.}
Even if two functions are near-optimal in terms of labels, other properties of the source data $\Ds$ can distinguish between the two.
Visualization techniques such as Grad-CAM \citep{selvaraju2017grad} reveal which region of the input is relevant to each head.
Such regions can be compared against the true predictive features by a human observer.
This comparison could also be performed in an automated way using existing bounding box annotations or pixel-wise segmentation labels, such as those available for the ImageNet and COCO datasets \citep{deng2009imagenet,lin2014microsoft}.
A key advantage of this method is that we do not require anything from the target domain beyond unlabeled datapoints.

We note that the active and random query strategies do not use any information unavailable to previous approaches: existing methods~\citep{nam2020learning,liu2021just} use labeled target data to tune hyperparameters.
Unless stated otherwise, we use active querying because of its superior label efficiency. 
In \cref{app:gen_bound}, we further analyze such label efficiency through a generalization bound that depends only on the number of heads and the difference between the best and second best head.
In our experiments, the active strategy required as little as a single label.
We emphasize that as long as the best head is selected, the choice of disambiguation method only affects label efficiency, and the final performance of the selected head is the same for all three methods.
We summarize the overall structure of DivDis in \cref{alg:main}.

\section{Experiments} \label{sec:experiments}

Through our experimental evaluation, we aim to answer the following questions.
(1) What functions does the \Div stage discover on simple, low-dimensional problems?
(2) Can DivDis tackle image and language classification problems with severe underspecification, in which simplicity bias hinders the performance of existing approaches?
(3) How sensitive is DivDis to hyperparameters, and what data assumptions are needed to tune them?
(4) How does DivDis compare to unsupervised domain adaptation algorithms, which also leverage unlabeled data from the target domain?

Unless stated otherwise, DivDis uses a network with $2$ heads and the \Dis stage uses the active querying strategy with $16$ labels, which we found suffices to recover the best between two heads.
We closely follow the experimental settings of previous works, and all experimental details including datasets, architectures, and hyperparameters are in the appendix.

\begin{figure}
\vspace{-20pt}
\begin{minipage}{0.35\textwidth}
\input{contents/fig2.tex}
\end{minipage}\hfill%
\begin{minipage}{0.62\textwidth}
\input{contents/training_curve.tex}
\end{minipage}
\vspace{-10pt}
\end{figure}

\subsection{Illustrative Toy Task}
\label{subsec:exp:toy}

\textbf{2D classification task.}
We start with a synthetic 2D binary classification problem to see what functions the \Div stage discovers.
The task is shown in \cref{fig:toy_2d} (left): inputs are points on a plane, and the  source distribution includes points in the second and fourth quadrants while the target distribution covers all four quadrants. 
By design, the labels for the points in the first and third quadrants are ambiguous.
We train a network with two heads and measure target accuracy of each head throughout the \Div stage.
We visualize learning curves and decision boundaries in \cref{fig:toy_2d_curve}, which show that while the two heads initially learn to fit the source data, they later diverge to functions based on different predictive features.
We also show an extended visualization with additional metrics in the appendix (\cref{fig:learning_curve_full}).

\textbf{Coverage of near-optimal set.}
To further understand how well DivDis can cover the set of near-optimal functions, we trained a $20$-head model on the same 2D classification task,
where each head is a linear classifier.
Results in \cref{fig:toy_2d} (right) show that the heads together span the set of linear classifiers consistent with the source data.
Note that this set includes the function with the diagonal decision boundary ($y=x$).
This result suggests that given enough heads, the set of functions learned by DivDis can sufficiently cover the set of near-optimal functions, including the simplest function learned by existing methods.

\textbf{Comparison with ensembles.}
We compare the diversity of the functions produced by the \Div stage to that of independently trained models on a $3$-dimensional version of the binary classification task.
We measure how much each function relies on each of the three input dimensions through the Pearson correlation coefficient between each input dimension and the prediction.
Visualizations in \cref{fig:toy_3d} of \cref{app:sec:experiments} show that the functions learned by DivDis depend on different input features, whereas independently trained models use a roughly equal mix of all features.
This experiment demonstrates that the diversity in a vanilla ensemble cannot effectively cover the set of near-optimal functions, and is therefore insufficient for underspecified problems.

\subsection{Tasks with Complete Spurious Correlation}
\label{subsec:exp:complete_correlation}

We evaluate DivDis on datasets with a \textit{complete correlation}, where the source distribution has a spuriously correlated attribute that can predict the label perfect accuracy.
To make this problem tractable, we leverage unlabeled target data $\Dt$ for which the spurious attribute is not completely correlated with labels, as in the toy classification task (\cref{fig:toy_2d}).
Introducing complete correlations makes the problem considerably harder than existing subpopulation shift problems, because even the $\varepsilon$-optimal set with $\varepsilon=0$ includes classifiers based on the spurious attribute.

\textbf{Real data with complete correlation.}
We evaluate DivDis on several benchmarks for spurious correlation (Waterbirds, CelebA, MultiNLI) that we modified to exhibit a complete correlation between the label and the spurious attribute.
Speficially, we alter the source dataset to include only the majority groups (e.g. waterbird with water background) and (e.g. landbird with land background).
We use the original target validation set as the unlabeled data for the \Div stage.
We denote these tasks as Waterbirds-CC etc., to distinguish from the original benchmarks.
These tasks are considerably more difficult than the original, and they introduce a specific challenge not addressed in the existing literature: leveraging the difference in source and target data distribution to encode and subsequently disambiguate tasks with high degrees of underspecification.
To our best knowledge, no prior methods are designed to address such complete correlations.

To examine the sample efficiency of the additional label queries used by DivDis during the \Dis stage, we experiment with $N=\{4, 8, 16, 32, 64, 128\}$ labeled target examples on the Waterbirds-CC and CelebA-CC tasks, evaluating the performance using both active and random querying.
We consider two baseline methods which similarly use $N$ datapoints from the target distribution in addition to the completely correlated source data:
(1) ERM on the Waterbirds-CC training dataset with $N$ additional minority datapoints and 
(2) Deep Feature Reweighting \citep[DFR]{kirichenko2022last}, which first trains an ERM model on the training dataset and then fine-tunes the last layer on a group-balanced set of size $N$.
\cref{fig:cc_samples} shows that while DFR is substantially more sample-efficient than ERM, possibly due to its two-stage nature, even DivDis with random querying shows substantially higher sample efficiency.
This indicates that the small set of diverse functions learned during the \Div stage is critical for quickly learning from additional minority data.

As an additional na\"ive point of comparison in the complete correlation setting, we evaluate the performance of existing methods for addressing spurious correlations: ERM, JTT \citep{liu2021just}, and Group DRO \citep{sagawa2019distributionally}). 
As these methods are designed for settings with a milder spurious correlation and do not leverage additional information through unlabeled target data, they are expected to fail in the complete correlation setting.
Results for the Waterbirds-CC, CelebA-CC, and MultiNLI-CC tasks in \cref{tab:perfect_all} of \cref{app:sec:experiments} show that these methods show subpar performance: their worst-group accuracy is worse than that of random guessing in some settings, and have a performance gap with DivDis of up to $35 \%$ in worst-group accuracy.
In \cref{tab:ensemble}, we also compare DivDis to a vanilla ensemble and our implementation of \citet{teney2021evading}, where we observe no further benefits from these ensemble methods in our complete correlation setting.
This experiment also shows that DivDis achieves substantially higher worst-group accuracy compared to applying the Dis stage to an independent ensemble, demonstrating that that the Div stage is critical for performance.

\begin{figure}
\vspace{-30pt}
\begin{minipage}{0.49\textwidth}
\centering\includegraphics[width=0.5\linewidth]{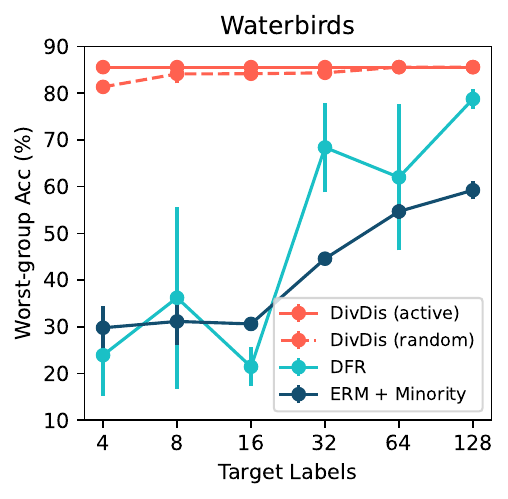}
\centering\includegraphics[width=0.48\linewidth]{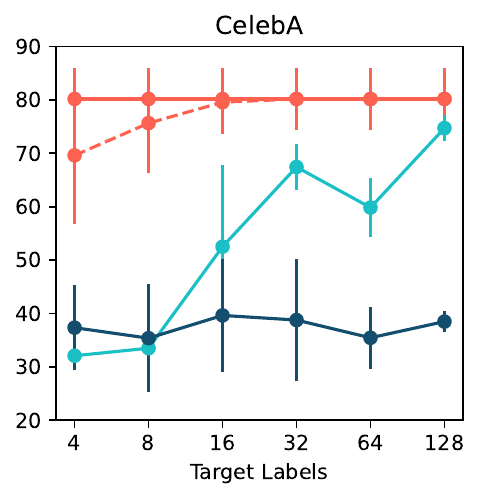}
\vspace{-5pt}
\captionof{figure}{
Worst-group accuracy on Waterbirds-CC and CelebA-CC data, given different numbers of labeled target examples.
DivDis is significantly more label-efficient than other methods in learning from non-majority labels.
}
\label{fig:cc_samples} 
\end{minipage}\hfill%
\begin{minipage}{0.49\textwidth}
\begin{subfigure}[t]{0.52\linewidth} \centering
    \includegraphics[width=\linewidth]{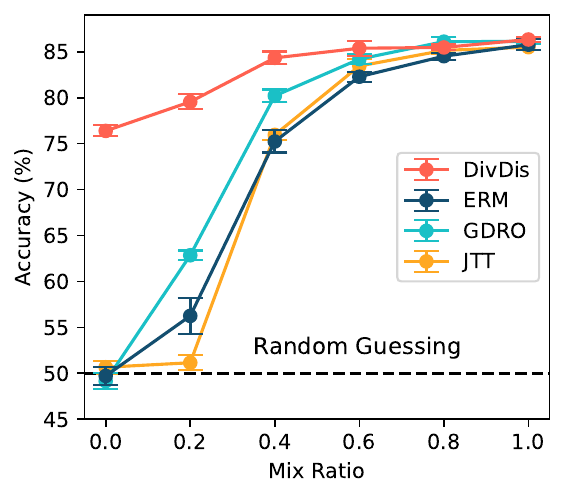}
    \caption{\label{fig:MC_accuracy} Accuracy vs mix ratio.}
\end{subfigure}
\begin{subfigure}[t]{0.45\linewidth} 
\centering
    \includegraphics[width=0.48\linewidth]{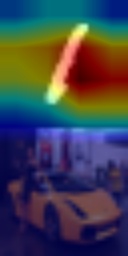}
    \includegraphics[width=0.48\linewidth]{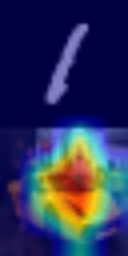}
    \caption{\label{fig:gradcam} Grad-CAM visualization.
    }
\end{subfigure}
\vspace{-5pt}
\caption{
(a) Accuracy of DivDis, ERM, Group DRO, and JTT on MNIST-CIFAR data with different correlation ratios.
(b) Grad-CAM visualization of two learned heads on a random datapoint from the MNIST-CIFAR source dataset.
See \cref{subsec:exp:complete_correlation} for details.
}
\end{minipage}
\end{figure}

\begin{figure}
\centering
\begin{minipage}{0.32\linewidth}
\centering
\resizebox{\linewidth}{!}{
\begin{tabular}{lc}
\toprule
& Worst ($\%$)\\
\midrule
Ensemble & \msdn{34.8}{1.0} \\
\citet{teney2021evading} & \msdn{33.4}{1.6} \\
\midrule
Ensemble + Dis & \msdn{36.0}{1.2} \\
DivDis (ours) & \msdn{\BF 82.4}{1.9} \\
\bottomrule
\end{tabular}
}
\captionof{table}{
Worst-group accuracy of various ensemble methods on the Waterbirds-CC dataset.
All methods use two heads or ensemble members.
}
\label{tab:ensemble}
\end{minipage}\hfill%
\begin{minipage}{0.65\linewidth}
\input{contents/hyperparam_grid.tex}
\vspace{-5pt}
\end{minipage}
\end{figure}

\textbf{Overcoming simplicity bias on MNIST-CIFAR data.}
In the MNIST-CIFAR task \citep{shah2020pitfalls}, each datapoint is a concatenation of one MNIST image and one CIFAR image, and labels are binary.
The source dataset is completely correlated: the first class consists of (MNIST zero, CIFAR car) images, and the second class is (MNIST one, CIFAR truck).
The unlabeled target dataset is constructed from the validation sets of MNIST and CIFAR, and has no such correlation. By design, the unseen combinations are ambiguous.
We evaluate on variants of MNIST-CIFAR with different levels of underspecification.
We denote the \textit{mix ratio} of $\Ds$ as $r \in [0, 1]$, where $r=0$ indicates completely correlated data as described above, and $r=1$ indicates the distribution of the target set. 
\Cref{fig:MC_accuracy} shows the target domain accuracy of 
DivDis, ERM,  Group DRO~\citep{sagawa2019distributionally}, and JTT~\citep{liu2021just} after training with various mix ratios.
Existing methods fail to do better than random guessing ($50\%$) in the completely correlated setting (ratio$=0.0$), whereas DivDis achieves over $75\%$ accuracy.
Higher ratios make the problem closer to an i.i.d. setting where the source and target distributions are identical, and the two methods achieve similar accuracy.

\textbf{Comparison of \Dis strategies on MNIST-CIFAR data.}
Using MNIST-CIFAR data with a complete correlation, we plot the average final accuracy after the \Dis stage for both the active query and random query strategies, for different number of labels used.
\cref{fig:MC_target_labels} of \cref{app:sec:experiments} shows that active querying in particular is very efficient, and one label suffices for finding the head with highest target data accuracy.
We further verify the possibility of disambiguation on source data: \cref{fig:gradcam} shows Grad-CAM \citep{selvaraju2017grad} visualizations of two heads on a randomly sampled image from the source dataset. 
Even though the two heads predict the same label, they respectively focus on distinct features of the data: the MNIST region and the CIFAR region.
Since we know that the true predictive feature is the CIFAR image, we can select the second head based on this single datapoint, and nothing beyond what was used during training.

\subsection{Underspecification from Distribution Shift}

\textbf{Comparison with unsupervised domain adaptation methods.}
Finally, we evaluate DivDis on the Camelyon17-WILDS dataset \citep{sagawa2022extending}, a large-scale tumor classification dataset consisting of $455,954$ labeled source datapoints and $600,030$ unlabeled target datapoints.
The target datapoints are collected from a hospital not seen in the source data.
As the unlabeled set for DivDis, we use the official \textrm{val\_unlabeled} set provided by \citet{sagawa2022extending}.
We compare against several approaches that can also leverage this unlabeled data: Pseudo-Label \citep{lee2013pseudo}, FixMatch \citep{sohn2020fixmatch}, CORAL \citep{sun2016correlation}, and NoisyStudent \citep{xie2019selftraining}.
All results other than DivDis are from \citet{sagawa2022extending}.
Quantitative results in \cref{tab:wilds} show that DivDis outperforms these methods, achieving over $90\%$ OOD test set accuracy.
This experiment demonstrates that the approach of DivDis scales to large datasets, effectively capturing the implicit ambiguity inside large unlabeled datasets from a related but different distribution.
This experiment demonstrates that DivDis can effectively leverage unlabeled data for underspecification arising from variation in real-world data collection conditions.

\textbf{Do we need group labels for hyperparameter tuning?}
Existing methods for learning from data with subpopulation shift typically tune hyperparameters using group label annotations \citep{levy2020large,nam2020learning,liu2021just}, making them deployable only in scenarios where group labels are available.
To examine the dataset assumptions required to successfully tune DivDis's hyperparameters, we ran a hyperparameter sweep over $(\lambda_1, \lambda_2)$.
For each setting, we measure three metrics using held-out data:
(1) average accuracy on $\Ds$, (2) average accuracy on $\Dt$, and (3) worst-group accuracy on $\Dt$.
These metrics correspond to different assumptions about available information:
(1) labeled source data, (2) labeled target data, and (3) labeled target data with group annotations, respectively.

Results in \cref{fig:hyperparam} show that the three metrics have a clear correlation.
Notably, this implies that that tuning the hyperparameters of DivDis with respect to average accuracy on $\Ds$ yields close an optimal model for worst-group accuracy on the target distribution.
Additional experiments on the CelebA dataset exhibit a similar trend, as shown in \cref{app:sec:experiments}.
In \cref{tab:cross_full}, we compare DivDis to existing methods as reported by \citet{liu2021just}.
Note that the ``Average'' columns here correspond to our second weakest data assumption of labeled target data, whereas \cref{fig:hyperparam} implies that we can tune DivDis's hyperparameters using only labeled source data.
This experiment demonstrates that compared to previous methods for distribution shift, DivDis's hyperparameters require substantially less information to tune.

\begin{table}
\vspace{-15pt}
\begin{minipage}{0.66\textwidth}
\centering
\resizebox{0.95\linewidth}{!}{
\begin{tabular}{@{\extracolsep{4pt}}lcccc@{\extracolsep{4pt}}}
\toprule
    & \multicolumn{2}{c}{Waterbirds Worst Acc}
    & \multicolumn{2}{c}{CelebA Worst Acc} \\
    \cmidrule{2-3} \cmidrule{4-5} 
    Tuned with: & Worst & Average 
    & Worst & Average \\
    \midrule
    CVaR DRO  %
    & $ 75.9 \% $ & $ 62.0 \% $
    & $ 64.4 \% $ & $ 36.1 \% $ \\
    LfF  %
    & $ 78.0 \% $ & $ 44.1 \% $
    & $ 77.2 \% $ & $ 24.4 \% $ \\
    JTT  %
    & $ 86.7 \% $ & $ 62.5 \% $
    & $ 81.1 \% $ & $ 40.6 \% $ \\
\midrule
    DivDis (ours)
    & $ 85.6 \% $ & $ \bm{81.0} \% $
    & $ 55.0 \% $ & $ \bm{55.0} \% $ \\
\bottomrule
\end{tabular}
}
\captionof{table}{
    Worst-group test accuracies in the Waterbirds and CelebA tasks, when tuning hyperparameters with respect to average and worst-group accuracies.
    DivDis is substantially more robust to hyperparameter choice in both tasks, allowing us to tune hyperparameters without group labels.
    }%
\label{tab:cross_full}
\end{minipage}\hfill%
\begin{minipage}{0.31\textwidth}
\centering
\resizebox{\linewidth}{!}{
\begin{tabular}{lc}
\toprule
    & Test Acc \\
\midrule
Pseudo-Label  %
    & \msdn{67.7}{8.2} \\
DANN  %
    & \msdn{68.4}{9.2} \\
FixMatch  %
    & \msdn{71.0}{4.9} \\
CORAL  %
    & \msdn{77.9}{6.6} \\
NoisyStudent  %
    & \msdn{86.7}{1.7} \\
\midrule
DivDis (ours)
    & \msdn{\BF 90.4}{1.8} \\
\bottomrule
\end{tabular}
}
\captionof{table}{
    OOD test accuracy on Camelyon17-WILDS.
    All methods leverage unlabeled target data.
}
\label{tab:wilds}
\end{minipage}
\vspace{-10pt}
\end{table}

\section{Related Work} \label{sec:related}

\textbf{Underspecification.}
Prior works have discussed the underspecified nature of many datasets \citep{d2020underspecification,oakden2020hidden}.
Underspecification is especially problematic when the bias of deep neural networks towards simple functions \citep{arpit2017closer,gunasekar2018implicit,shah2020pitfalls,geirhos2020shortcut,pezeshki2021gradient,pezeshki2021gradient} is not aligned with the true function.
Yet, these works do not present a general solution. 
As we find in Section~\ref{sec:experiments}, DivDis can address underspecified datasets, even when one viable solution is much simpler than another, since only one of the heads can represent the simplest solution.
Our notion of near-optimal sets can be seen as an extension of \textit{Rashomon sets} \citep{fisher2019all,semenova2019study} to the unsupervised domain adaptation setting.
Active learning methods \citep{cohn1996active,hanneke2014theory} are also related in that they handle underspecification by reducing ambiguity.
Our MI-based diversity term resembles a common active learning criterion \citep{houlsby2011bayesian}, but a key difference is that we directly optimize a set of models with respect to our criterion.

\textbf{Ensemble methods.}
Our approach is related to ensemble methods \citep{hansen1990neural,dietterich2000ensemble,lakshminarayanan2016simple}, 
which aggregate the predictions of multiple learners.
Ensembles have been shown to perform best when each member produces errors independently of one another \citep{krogh1995neural},
a property we exploit by maximizing disagreement on unlabeled test data.
Previous works have extended ensembles by learning a diversified set of functions \citep{pang19improving,parker2020ridge,wortsman2021learning,rame2021dice,sinha2021dibs}.
While the \Div stage similarly learns a collection of diverse functions, our approach differs in that we directly optimize for diversity on a separate target dataset.
Prior works have learned a diverse set of reinforcement learning policies both with an underlying task \cite{mouret2015illuminating,conti2017improving,kumar2020one} and with no task at all \cite{lehman2011novelty,eysenbach2018diversity,sharma2019dynamics}.
While similarly motivated, these works operate in a very different setting, since we consider supervised learning with underspecification.
So-called \textit{quality diversity} methods aim to balance performance and novelty in a population \cite{lehman2011novelty,mouret2015illuminating,cully2017quality}.

Two recent works leverage unlabeled target data to learn a set of diverse functions.
\citet{teney2021evading} introduces a gradient orthogonality constraint with respect to features from a pre-trained backbone.
We provide further experimental comparison to this method in \cref{sec:collages}.
Concurrently to our work, \citet{pagliardini2022agree} propose to sequentially train a set of functions with a diversity loss on target data.
In contrast, \DivDis requires a single network and training loop regardless of the number of heads.
Furthermore, \cref{sec:experiments} demonstrates that \DivDis scales to larger datasets.

\textbf{Robustness and causality.}
Many recent methods aim to produce robust models that succeed even in conditions of distribution shift \citep{tzeng2014deep,ganin2016domain,arjovsky2019invariant,sagawa2019distributionally,nam2020learning,creager2021environment,liu2021just}.
While our work is similarly motivated, we address a class of problems that these previous methods fundamentally cannot handle.
By nature of learning only one function, these robustness methods cannot disambiguate problems where the true function is truly ambiguous, in the sense that functions based on two different features can both be near-optimal.
DivDis handles such scenarios by learning multiple functions in the \Div stage and then choosing the correct one in the \Dis stage with minimal added supervision.
This research direction is also related to inferring the causal structure \citep{pearl2000models,scholkopf2019causality} of observed attributes.
Although many causality works focus on situations in which interventions are impossible, we explore inherently ambiguous problems where some form of intervention is necessary to succeed.
Additionally, recent methods for extracting causality from observational data have been most successful in low-dimensional settings \citep{louizos2017causal,goudet2018learning,ke2019learning}, whereas our method easily scales to large convolutional networks for image classification problems.

\section{Conclusion} \label{sec:conclusion}
We proposed Diversify and Disambiguate (DivDis), a two-stage framework for learning from underspecified data.
Our experiments show that DivDis has substantially higher performance when learning from datasets with high degrees of underspecification, at the modest cost of unlabeled target data and a few corresponding labels.
To our knowledge, our method is the first to address this problem setting in the context of underspecification.
An appealing property of DivDis is its automatic discovery of meaningful complementary features, as demonstrated in \cref{fig:hyperparam,tab:cross_full}.
This capability is related to that of disentangled feature learning~\citep{chen2016infogan,higgins2016beta,kim2018disentangling,chen2018isolating}, which is often posed as an unsupervised problem.
Our experiments show the alternative possibility of learning different possible meanings of labels in an underspecified supervised learning setting.

\section*{Acknowledgements} \label{sec:ack}
We thank Damien Teney for help in reproducing the published Collages task experiments. 
We also thank Pang Wei Koh, Henrik Marklund, Annie S. Chen, other members of the IRIS and RAIL labs, and anonymous reviewers for helpful discussions and feedback.
This work was supported in part by KFAS, Google, Apple, Juniper Networks, and Open Philanthropy.
Chelsea Finn is a fellow in the CIFAR Learning in Machines and Brains program.

\bibliography{master}
\bibliographystyle{iclr2023_conference}

\clearpage 
\appendix 
\section{The Collages Task: Can DivDis Learn More Than Two Functions?}
\label{sec:collages}
\begin{table} \centering
\vspace{3pt}
\begin{tabular}{llcccc}
\toprule
Method & Parameters & MNIST & SVHN & FMNIST & CIFAR \\
\midrule
Upper bound oracle & - & 99.7 & 89.7 & 77.4 & 68.7 \\
\midrule
Evading (N=8) & 27152 & 97.3 & 82.1 & 59.6 & 55.8 \\
Evading (N=16) & 54304 & 96.6 & 72.1 & 64.6 & \BF 58.4 \\
\midrule
Evading (N=16, weight sharing) & 3904 & 99.7 & 50.8 & 50.3 & 50.2 \\
Evading (N=32, weight sharing) & 4448 & 99.6 & 50.7 & 50.1 & 50.2 \\
DivDis (N=2) & 3428 & 98.1 & 66.5 & 62.5 & 51.5 \\
DivDis (N=4) & 3496 & 99.4 & \BF 75.1 & \BF 66.1 & 53.4 \\
\bottomrule
\end{tabular} 
\captionof{table}{
Accuracies for the 4-way collage task.
We denote the method in \citet{teney2021evading} as ``Evading''.
DivDis with $4$ heads shows competitive performance to the Evading method with up to $16$ independent models.
We note that by default, Evading uses \textit{separate models}, meaning parameter count and computation scales linearly with $N$.
The Evading method fails when the $N$ networks share backbone parameters.
}
\label{tab:collage}
\vspace{10pt}
\end{table}

\begin{figure}
\centering 
\includegraphics[width=0.3\linewidth]{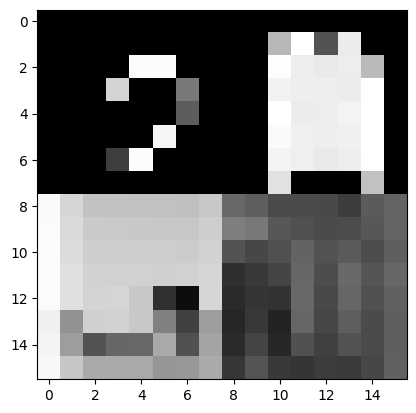}
\hspace{10pt}
\includegraphics[width=0.3\linewidth]{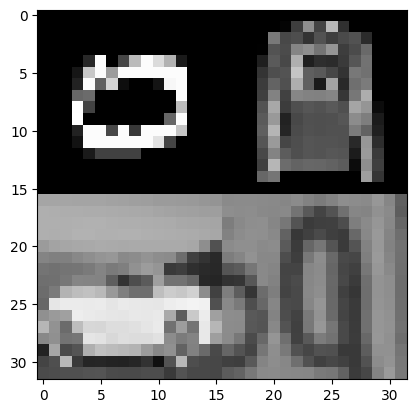}
\caption{
Samples from the original $16 \times 16$ collage dataset from \citet{teney2021evading} (left) and the higher-resolution $32 \times 32$ dataset used in \cref{tab:collage_x2} (right). A substantial amount of visual detail is lost in the original dataset, especially in harder datasets such as FashionMNIST and CIFAR.
}
\label{fig:collage_data}
\end{figure}

\begin{table} \centering
\vspace{3pt}
\begin{tabular}{lcccc}
\toprule
Method & MNIST & SVHN & FMNIST & CIFAR \\
\midrule
Evading (N=8) & 97.3 & 82.1 & 59.6 & 55.8 \\
Evading (N=16) & 96.6 & 72.1 & 64.6 & 58.4 \\
\midrule
Evading (N=8, ours) & 99.5 & 70.0 & 56.3 & 50.9 \\
Evading (N=16, ours) & 99.7 & 86.3 & 65.9 & 60.8 \\
\midrule
Evading (N=8, ours, $\times 2$) & 99.5 & 49.7 & 51.6 & 49.7 \\
Evading (N=16, ours, $\times 2$) & 99.8 & 49.6 & 52.0 & 49.7 \\
DivDis (N=2, $\times 2$) & 97.6 & 63.9 & 61.8 & 51.7 \\
DivDis (N=4, $\times 2$) & 96.5 & \BF 73.0 & \BF 69.2 & \BF 52.7 \\
\bottomrule
\end{tabular} 
\captionof{table}{
Collage dataset accuracies with higher resolution images ($16 \times 16 \rightarrow 32 \times 32$). 
We re-implemented the ``Evading'' method of \citet{teney2021evading}, and denote results from our codebase as ``ours''.
The Evading method fails on $32 \times 32$ images, likely because input-space diversification is sensitive to input dimensionality.
In contrast, DivDis performs output-space diversification and is robust to change in input dimensionality.
}
\label{tab:collage_x2}
\vspace{10pt}
\end{table}

To further investigate whether DivDis can learn more than two functions, we evaluate DivDis on the four-block collages dataset~\citep{teney2021evading}.
This dataset is a binary classification task in which four attributes have a complete spurious correlation with labels in the source dataset.
Each datapoint is constructed by concatenating one image each from the MNIST~\citep{deng2012mnist}, SVHN~\citep{svhn}, FashionMNIST~\citep{xiao2017fashion}, and CIFAR~\citep{krizhevsky2009learning} datasets in a $2 \times 2$ grid.
This task aims to learn the predictive patterns corresponding to each of the four datasets based on labeled data with a complete correlation between all four.
After training on the source data, we separately test for each of the four features using de-correlated datasets labeled according to one of the four features.
We report the highest accuracy achieved among functions inside the learned ensemble for each of the four features.

We evaluated DivDis with $2$ or $4$ heads on this task.
For more than two heads, we minimize the average mutual information loss \cref{eq:mi} between all pairs of heads.
As the unlabeled set for DivDis, we use decorrelated $4$-way collage data constructed from the validation split of each dataset.
Results in \cref{tab:collage} compare feature-wise test accuracies with that of ``Evading''~\citep{teney2021evading}.
DivDis with $4$ heads shows competitive performance to Evading with up to $16$ independent models, indicating that DivDis requires substantially fewer functions to achieve the same performance as Evading.
We also note that it is crucial to use the correct number of heads for DivDis. As expected, training only $N=2$ heads results in subpar performance because this dataset has $4$ different features.
The two methods differ in scalability: Evading uses $N$ independent models, meaning that the number of parameters and computation cost scales linearly with $N$.
In contrast, DivDis shares all weights except for the last layer.
As shown in \cref{tab:collage}, Evading fails when the $N$ networks perform weight sharing.

As shown in \cref{fig:collage_data}, the images in the original collages dataset have resolution $16 \times 16$, which loses a substantial amount of visual detail because each image from (MNIST, SVHN, FMNIST, CIFAR) is reduced to an $8 \times 8$ image.
We evaluate both methods on larger $32 \times 32$ collage data in \cref{tab:collage_x2}, where Evading fails to discover features besides MNIST.
Evading is an \textit{input-space diversification} method, making it sensitive to input dimensionality.
In contrast, DivDis performs \textit{output-space diversification}, making its performance more robust to changes in input dimensionality.

Diversification is an effective approach to handling underspecification, and the two broad approaches of input-space diversification~\citep{teney2021evading,teney2022predicting} and output-space diversification (e.g., DivDis) are complementary approaches with different strengths and weaknesses.
As shown in \cref{tab:collage,tab:collage_x2}, DivDis is more scalable in terms of model count and input size.
We also note that this scalability is also reflected in results on the large-scale Camelyon dataset: DivDis achieves $90.4 \%$ accuracy on test hospitals (\cref{tab:wilds}), whereas the method in \citet{teney2022predicting} achieves $82.5 \%$.
Conversely, while DivDis was shown to require fewer functions, Evading~\citep{teney2021evading} achieves higher performance on the original collage dataset when using up to $96$ models.
An exciting direction for future work is understanding and consolidating these tradeoffs between input-space and output-space diversification methods for underspecified tasks.

\section{Parallel Implementation of Mutual Information Objective}
\label{app:sec:code}
\inputminted[fontsize=\small]{python}{tables/divdis.py}

This implementation is based on the PyTorch \cite{NEURIPS2019_9015} and einops \cite{rogozhnikov2022einops} libraries.
It demonstrates that the mutual information objective \cref{eq:mi} is easily parallelized across the input batch using standard tensor operations.

\section{Experimental Setup}
\label{app:sec:exp_detail}

\subsection{Detailed Dataset Descriptions}
\textbf{Toy classification task.}
Our toy binary classification data is constructed as follows.
The source dataset $\Ds$ has binary labels with equal aggregate probability $p(y=0)=p(y=1)=\frac{1}{2}$.
Each datapoint is a $2$-dimensional vector, and the data distribution for each class in the source dataset is:
\begin{align*}
    p(x \mid y=0) &= \textrm{Unif}([-1, 0] \times [0, 1]) \\
    p(x \mid y=1) &= \textrm{Unif}([0, 1] \times [-1, 0]).
\end{align*}
In contrast, the data distribution for each class in the target dataset is:
\begin{align*}
    p(x \mid y=0) &= \textrm{Unif}([-1, 0] \times [-1, 1]) \\
    p(x \mid y=1) &= \textrm{Unif}([0, 1] \times [-1, 1]).
\end{align*}
Labels are balanced for the target dataset.
Put differently, the target dataset has a larger span than the source dataset, and the labels of the target dataset reveal that the true decision boundary is the $Y$-axis.

\textbf{CXR-14 pneumothorax classification.}
The CXR-14 dataset \cite{wang2017chestx} is a large-scale dataset for pathology detection in chest radiographs. 
We evaluate on the binary pneumothorax classification task, which has been reported to suffer from hidden stratification: a subset of the images with the disease include a chest drain, a common treatment for the condition \cite{oakden2020hidden}.

\textbf{Waterbirds dataset.}
Each image in the Waterbirds dataset is constructed by pasting a waterbird or landbird image to a background drawn from the Places dataset~\cite{zhou2017places}. There are two backgrounds in this dataset -- water and land, where each category of birds is spuriously correlated with one background. Specifically, there are 4,795 training samples, where 3,498 samples are from "waterbirds in water" and 1,057 samples are from "landbirds in land". "Waterbirds in land" and "landbirds in water" are considered as minority groups, where 184 and 56 samples are included, respectively.

\textbf{CelebA dataset.}
The CelebA dataset \cite{liu2015faceattributes} is a large-scale image dataset with over $200,000$ images of celebrities, each with $40$ attribute annotations.
We construct four different completely correlated problem settings, each based on a pair of attributes.
The pair of attributes consists of a label attribute and a spurious attribute, and we remove all examples from the two minority groups in the source dataset.
The four problem settings are summarized below.
Our task construction is similar to that of \citet{sagawa2019distributionally}, which uses hair color as the label and gender as the spurious attribute.
\begin{table}[h]
\centering 
\begin{tabular}{lll}
\toprule
    & Label attribute & Spurious attribute \\
\midrule
CelebA-CC-1 & Mouth\_Slightly\_Open & Wearing\_Lipstick \\
CelebA-CC-2 & Attractive & Smiling \\
CelebA-CC-3 & Wavy\_Hair & High\_Cheekbones \\
CelebA-CC-4 & Heavy\_Makeup & Big\_Lips \\
\bottomrule
\end{tabular}
\end{table}

\textbf{MultiNLI dataset.}
Given a hypothesis and a promise, the task of MultiNLI dataset is to predict if the hypothesis is entailed by, neutral with, or contradicts with the promise. The spurious correlation exists between contradictions and the presence of the negation words nobody, no, never, and nothing~\cite{gururangan2018annotation}. The whole MultiNLI dataset is divided into six groups, where each spurious attribute belongs to \{"no negation", "negation"\} and each label belongs to \{entailed, neutral, contradictory\}. There are 206,175 samples in total, where the smallest group only has 1,521 samples (entailment with negations).

\textbf{Camelyon17-WILDS dataset.}
This dataset is part of the U-WILDS benchmark \cite{sagawa2022extending}.
Input images of patches from lymph node sections are given, and the task is to classify as either a tumor or normal tissue.
Evaluation is performed on OOD hospitals for which labels are unseen during training.
The model is given unlabeled validation images from the OOD hospitals.

\subsection{DivDis Hyperparameter Settings}
We show below the hyperparameters used in our experiments:
\begin{table}[h]
\centering 
\resizebox{\textwidth}{!}{
\begin{tabular}{llllllll}
\toprule
    & Toy Clasification & MNIST-CIFAR & Waterbirds & CXR-14 & Waterbirds-CC & CelebA-CC & MultiNLI-CC \\
\midrule
    $N$ 
    & $2, 20$ & $2$
    & $2$ & $2$ 
    & $2$ & $2$ & $2$
    \\
    $\lambda_1$ 
    & $10$ & $10$
    & $1, 10, 100, 1000$ & $10$ 
    & $10$ & $10$ & $1000$
    \\
    $\lambda_2$
    & $10$ & $10$
    & $0.1, 1, 10, 100$ & $10$ 
    & $0, 10$ & $0, 10$ & $0, 0.1$
    \\
    $m$
    & $1$ & $1$
    & $16$ & $16$
    & $16$ & $16$ & $16$ 
    \\
\bottomrule
\end{tabular}
}
\end{table}

\section{Additional Experiments}
\label{app:sec:experiments}

\paragraph{Extended learning curves on toy task.}
In \cref{fig:learning_curve_full}, we show an extended version of the learning curve shown in \cref{fig:toy_2d_curve}.
The extended plot includes cross-entropy loss and mutual information loss during training.
The learning curves show that cross-entropy loss decreases first, at which point both of the heads represent functions similar to the ERM solution.
Afterwards, the mutual information loss decreases, causing the functions represented by the two heads to diverge.

\paragraph{Visualization of functions on 3D toy task.}
We examine the extent to which the \Div stage can produce different functions by visualizing which input dimension each function relies on.
We modify the synthetic binary classification task to have $3$-dimensional inputs
and train DivDis with $\{2, 3, 5\}$ heads.
For each head, we visualize the Pearson correlation coefficient between each input dimension and output.
We normalize this 3-dimensional vector to sum to one and plot each model as a point on a 2-simplex in \cref{fig:toy_3d}, with independently trained functions as a baseline.
The results show that the \Div stage acts as a repulsive force between the functions in function space, allowing the collection of heads to explore much closer to the vertices.
This experiment also demonstrates why vanilla ensembling is insufficient for underspecified problems: the diversity due to random seed is not large enough to effectively cover the set of near-optimal functions.

\paragraph{Noisy dimension.}
To see if DivDis can effectively combat simplicity bias, we further evaluate on a harder variant of the 2D classification problem in which we add noise along the x-axis.
This noise makes the ``correct'' decision boundary have positive non-zero risk, making it harder to learn than the other function.
Results in \cref{fig:toy_2d_noisy} demonstrate that even in such a scenario, DivDis recovers both the x-axis and y-axis decision boundaries,
suggesting that DivDis can be effective even in scenarios where ERM relies on spurious features due to simplicity bias.

\paragraph{Evaluation on completely correlated data.}
\begin{table*}[t] \centering
\resizebox{\linewidth}{!}{ 
\begin{tabular}{@{\extracolsep{4pt}}lcccccccc@{\extracolsep{4pt}}}
\toprule
    & \multicolumn{2}{c}{Waterbirds-CC}
    & \multicolumn{2}{c}{CelebA-CC-1}
    & \multicolumn{2}{c}{CelebA-CC-2}
    & \multicolumn{2}{c}{MultiNLI-CC}
    \\
\cmidrule{2-3} \cmidrule{4-5} \cmidrule{6-7} \cmidrule{8-9} 
    & Avg ($\%$) & Worst ($\%$)
    & Avg ($\%$) & Worst ($\%$) 
    & Avg ($\%$) & Worst ($\%$) 
    & Avg ($\%$) & Worst ($\%$) \\
\midrule
Random
    & $50.0$  & $50.0$ 
    & $50.0$  & $50.0$ 
    & $50.0$  & $50.0$ 
    & $33.3$  & $33.3$ \\
\midrule
ERM         
    & $ 60.5 \pm 1.6 $ & $ 7.0 \pm 1.5 $
    & $ 70.9 \pm 2.0 $ & $ 57.0 \pm 5.8 $
    & $ 73.1 \pm 0.9 $ & $ 41.1 \pm 2.6 $
    & $ 53.2 \pm 1.5 $ & $ 22.8 \pm 2.5 $ \\
JTT %
    & $ 44.6 \pm 1.9 $ & $ 26.5 \pm 1.4 $
    & $ 71.4 \pm 1.9 $ & $ 51.2 \pm 5.4 $
    & $ 78.7 \pm 0.8 $ & $ 59.8 \pm 1.1 $
    & $ 80.0 \pm 4.0 $ & $ 40.5 \pm 2.3 $ \\
Group DRO %
    & $ 55.6 \pm 4.8 $ & $ 47.1 \pm 8.9 $
    & $ 71.6 \pm 0.3 $ & $ 59.3 \pm 2.6 $
    & $ 71.6 \pm 2.4 $ & $ 61.3 \pm 2.3 $
    & $ 79.1 \pm 3.4 $ & $ 39.8 \pm 1.4 $ \\
\midrule
DivDis - reg
    & $ 87.2 \pm 0.8 $ & $ 77.5 \pm 4.7 $
    & $ 91.0 \pm 0.4 $ & $ \bm{85.9} \pm 1.0 $ 
    & $ 79.7 \pm 0.4 $ & $ \bm{69.3} \pm 1.9 $ 
    & $ 80.3 \pm 0.6 $ & $ 67.6 \pm 4.0 $ \\
DivDis 
    & $ 87.6 \pm 1.4 $ & $ \bm{82.4} \pm 1.9 $
    & $ 90.8 \pm 0.4 $ & $ \bm{85.6} \pm 1.1 $ 
    & $ 79.5 \pm 0.2 $ & $ \bm{68.5} \pm 1.7 $
    & $ 79.9 \pm 1.2 $ & $ \bm{71.5} \pm 2.5 $ \\
\bottomrule
\end{tabular}
}
\caption{\label{tab:perfect_all}
Modified Waterbirds, CelebA, and MultiNLI datasets with complete correlation between labels and a spurious attribute.
\new{}
DivDis outperforms previous methods in terms of both average and worst-group accuracy.
}
\vspace{-10pt}
\end{table*}

We evaluate DivDis on existing benchmarks modified to exhibit complete correlation.
Using the Waterbirds \cite{sagawa2019distributionally}, CelebA \cite{liu2015faceattributes}, and MultiNLI \cite{gururangan2018annotation} datasets, we alter the source dataset to include only majority groups while keeping target data intact.
We denote these tasks as Waterbirds-CC, MultiNLI-CC, etc to distinguish from the original benchmarks.
These -CC tasks are considerably more difficult than the original benchmarks, and introduce a specific challenge not addressed in the existing literature: leveraging the difference in source and target data distribution to encode and subsequently disambiguate tasks with high degrees of underspecification.
To our best knowledge, no prior methods are designed to address such complete correlations.

As the closest existing problem setting is subpopulation shift, we show the performance of ERM, JTT \cite{liu2021just}, and Group DRO \cite{sagawa2019distributionally} as a naive point of comparison.
We also include a random guessing baseline as a lower bound on performance.
As expected, \cref{tab:perfect_all} shows that existing subpopulation shift methods show subpar performance on these tasks, notably failing to do better than random guessing in the Waterbirds-CC task.
This is hardly surprising, as methods based on loss upweighting such as JTT and Group DRO require minority examples in the training data to upweight.
In contrast, DivDis is well-suited to this challenging setting, and can deal with complete correlation by leveraging unlabeled target data to find different predictive features of the labels.

\paragraph{CelebA hyperparameter grid.}
In \cref{fig:celeba_grid}, we show an additional hyperparameter grid for the CelebA dataset.
This grid shows a strong corrleation between metrics with respect to hyperparameter choice, indicating that DivDis can be tuned using only labeled source data.

\paragraph{Additional Disambiguate experiments on MNIST-CIFAR data.}
In \cref{fig:MC_target_labels}, we plot the accuracy of the head selected by the active and random querying strategies, when using different numbers of label queries.
Notably, the active querying strategy successfully chooses the best head with even one label query.
In \cref{fig:gradcam_additional}, we show additional Grad-CAM plots for MNIST-CIFAR data, on $6$ more random datapoints from the source dataset.
Compared to the example given in the main text, these examples are just as informative in terms of which head is better.

\paragraph{Effect of ratio.}
We test various values between $0$ and $1$ for the regularizer loss \cref{eq:reg}, on the Waterbirds benchmark.
\Cref{fig:ratio} shows that even using a ratio of $0.1$ yields close to $80\%$ worst-group accuracy, demonstrating that the performance of DivDis is not very sensitive to this hyperparameter.

\begin{table} \centering
\vspace{3pt}
\begin{tabular}{lcccc}
\toprule
    & Acc & AUC 
    & AUC (drain) & AUC (no-drain) \\
\midrule
    ERM 
    & $ 0.883 $  & $ 0.828 $ 
    & $ 0.904 $  & $ 0.717 $  \\
    Pseudo-label
    & $ 0.898 $  & $ 0.835 $ 
    & $ 0.904 $  & $ 0.721 $  \\
    DivDis 
    & $ 0.934 $  & $ 0.836 $ 
    & $ 0.902 $  & $ \bm{0.737} $  \\
\bottomrule
\end{tabular} 
\captionof{table}{
    Pneumothorax classification metrics on the test set of CXR-14.
    In addition to overall accuracy and AUC, we measure AUC separately on the two subsets of the positive class, drain and no-drain.
    DivDis shows higher AUC on the no-drain subset, which is more indicative of the intended population of patients not yet being treated.
    Performance gains on the no-drain subset also contribute positively to the overall metrics (Acc and AUC).
}
\label{tab:cxr}
\vspace{10pt}
\end{table}
    
\textbf{CXR pneumothorax classification.}
To investigate whether DivDis can disambiguate naturally occurring spurious correlations, we consider the CXR-14 dataset \cite{wang2017chestx}, a large-scale dataset for pathology detection in chest radiographs. 
We evaluate on the binary pneumothorax classification task, which has been reported to suffer from hidden stratification: a subset of the images with the disease include a chest drain, a common treatment for the condition \cite{oakden2020hidden}.
We train DivDis with two heads to see whether it can disambiguate between the visual features of chest drains and lungs as a predictor for pneumothorax.
The unlabeled data for DivDis is a subset of validation data sampled so that the ratio of drain to no drain is 1:1.
In addition to ERM, we compare against the semi-supervised learning method Pseudo-label \cite{lee2013pseudo}, to see how much of the performace gain of DivDis can be attributed to the unlabeled target set alone.
In \cref{tab:cxr}, we show test split accuracy and AUC, along with AUC for the subset of positive samples with and without a chest drain.
Our experiments show that DivDis achieves higher AUC in the no-drain split while doing marginally worse on the drain split, indicating that the chosen head is relying more on the visual features of the lung.
The overall metrics (Acc and AUC) indicate that this performance gain in the no-drain subset also leads to better performance in overall metrics.

\begin{figure}[h] 
\centering \includegraphics[width=0.9\linewidth]{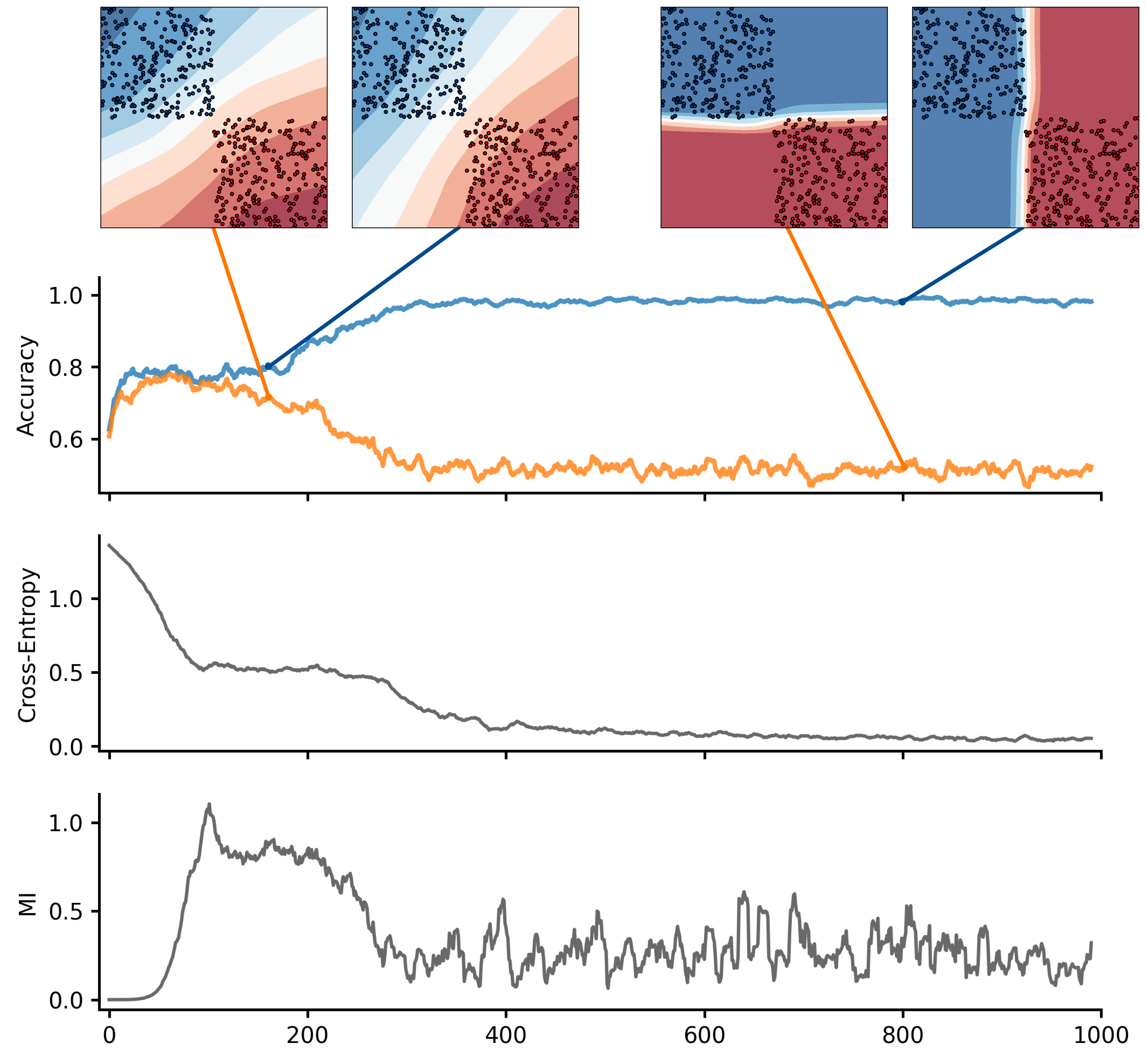}
\caption{
\label{fig:learning_curve_full}
Extended visualization for the 2D classification task (\cref{fig:toy_2d}), with additional curves for the cross-entropy and mutual information losses.
Note that only accuracy is measured with target data, and the cross-entropy and mutual information losses are the training metrics for \Div measured on source data.
Until around iteration $100$, the model initially decreases cross-entropy at the cost of increasing mutual information.
The decision boundaries at this stage are similar for the two heads.
Afterwards, both the mutual information and cross-entropy decrease, leading to the heads having very different decision boundaries.
}
\end{figure}

\begin{figure}[t] 
    \centering
    \includegraphics[width=0.25\linewidth]{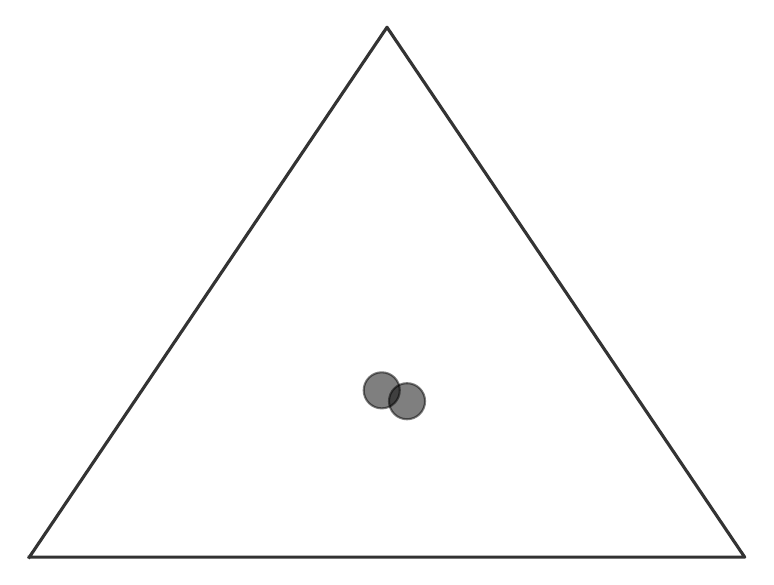}
    \includegraphics[width=0.25\linewidth]{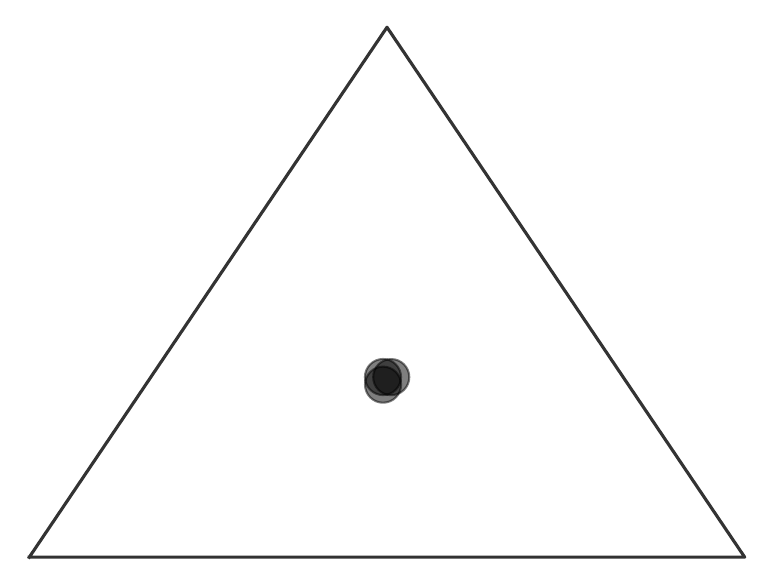}
    \includegraphics[width=0.25\linewidth]{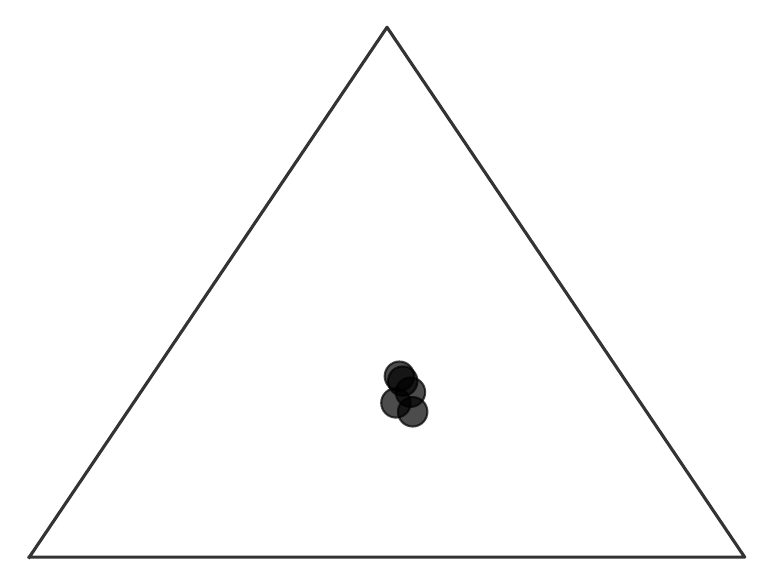} \\
    \centering
    \includegraphics[width=0.25\linewidth]{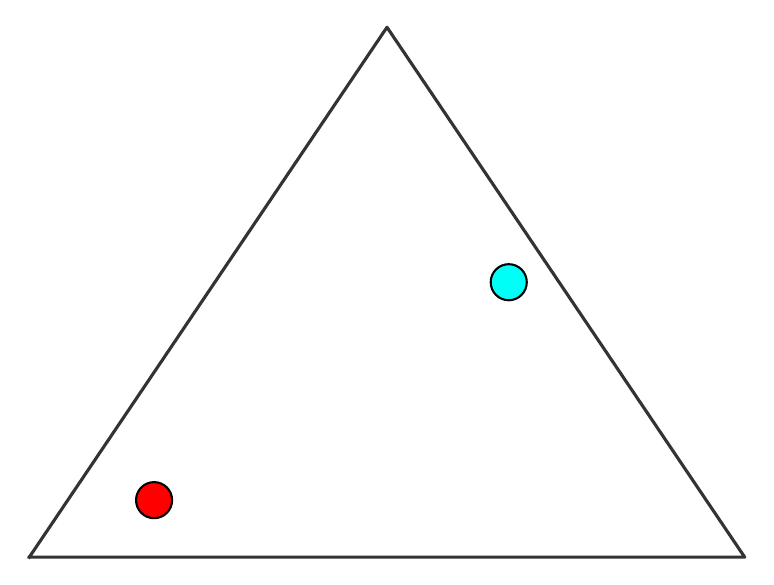} 
    \includegraphics[width=0.25\linewidth]{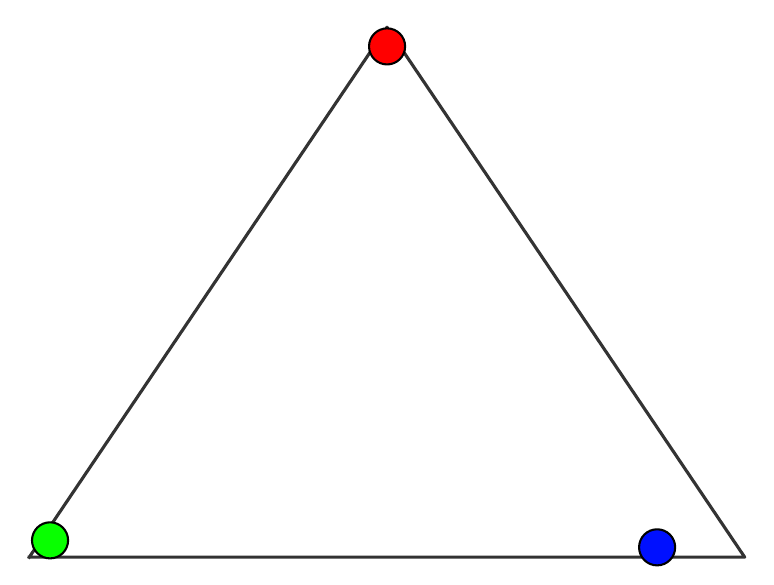} 
    \includegraphics[width=0.25\linewidth]{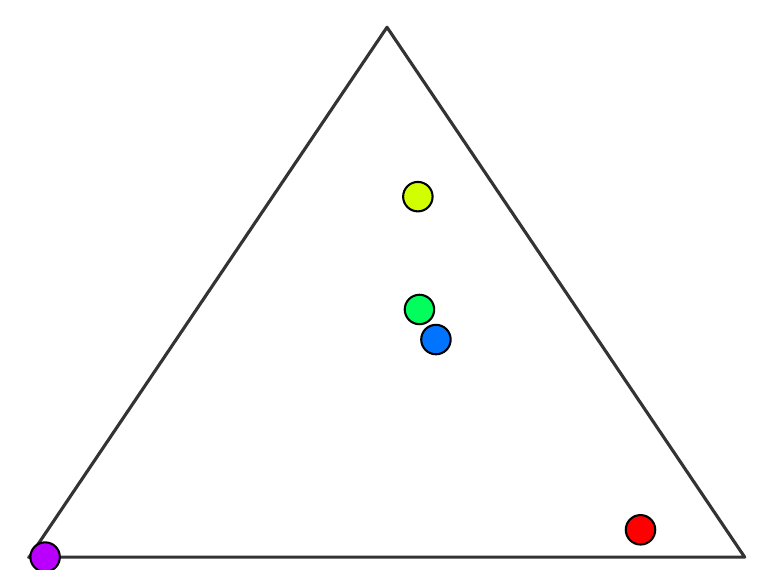}
    \caption{\label{fig:toy_3d}
    Visualization of $\{2, 3, 5\}$ functions trained independently (top row) and with DivDis (bottom row).
    Vertices of the $2$-simplex represent the three dimensions of the input data.
    The functions learned by DivDis are much more diverse compared to independent training.
    }
\end{figure}

\begin{figure}[t] \centering
    \includegraphics[width=0.5\linewidth]{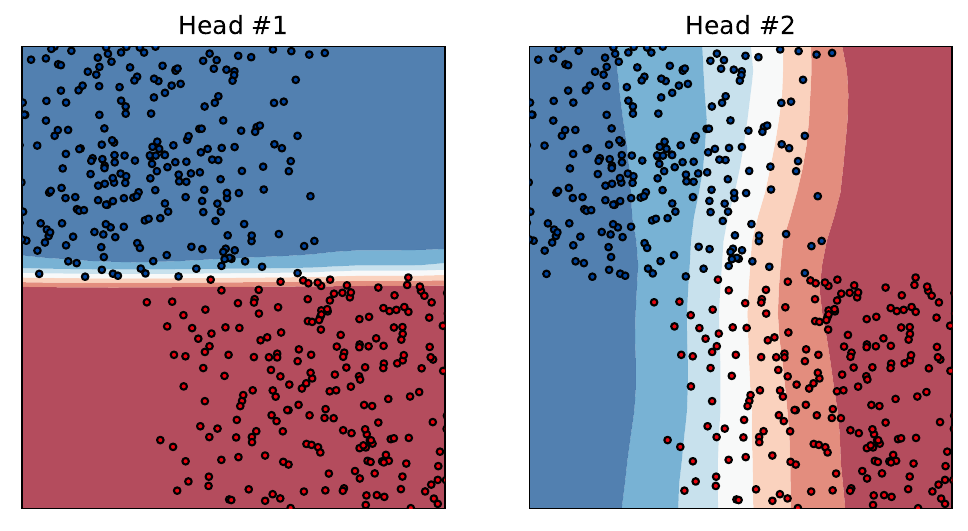}
    \caption{\label{fig:toy_2d_noisy} 
    Functions learned by DivDis on a variant of the synthetic classification task, where the labeled source dataset has noise along the $x$-axis.
    The second head recovers the $Y$-axis decision boundary even though it is harder to learn due to the noise.
    This indicates that DivDis can successfully overcome simplicity bias and learn functions that ERM would not consider.
    }
\end{figure}

\begin{figure}[t] \centering
\centering \includegraphics[width=0.5\textwidth]{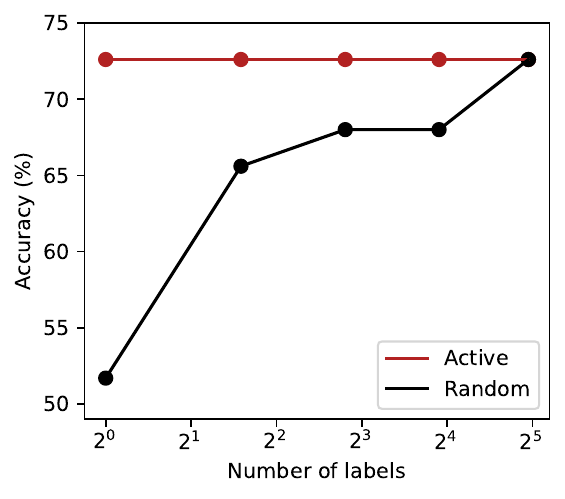}
\caption{\label{fig:MC_target_labels} 
CIFAR label accuracy of chosen head in DivDis vs label used for the Disambiguate stage.
The active querying strategy chooses the best head with even one label query, and the random query strategy similarly works with a modest budget of $32$ queries.
}
\end{figure}

\begin{figure}[t] 
\centering
    \hfill
    \includegraphics[width=0.15\linewidth]{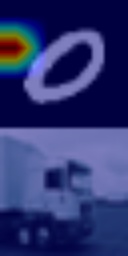}
    \includegraphics[width=0.15\linewidth]{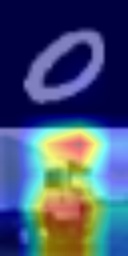} 
    \hfill
    \includegraphics[width=0.15\linewidth]{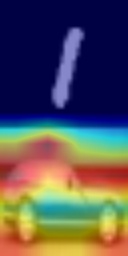}
    \includegraphics[width=0.15\linewidth]{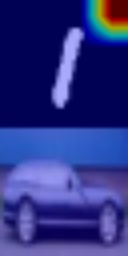} 
    \hfill
    \includegraphics[width=0.15\linewidth]{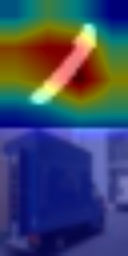}
    \includegraphics[width=0.15\linewidth]{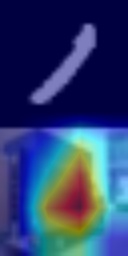}
    \hfill \\
\vspace{10pt}
\centering
    \hfill
    \includegraphics[width=0.15\linewidth]{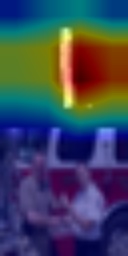}
    \includegraphics[width=0.15\linewidth]{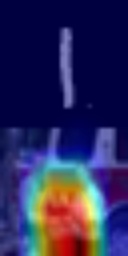}
    \hfill
    \includegraphics[width=0.15\linewidth]{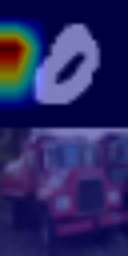}
    \includegraphics[width=0.15\linewidth]{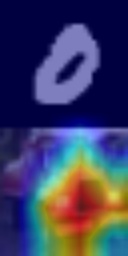}
    \hfill
    \includegraphics[width=0.15\linewidth]{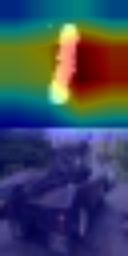}
    \includegraphics[width=0.15\linewidth]{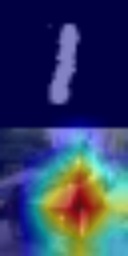}
    \hfill \\
\caption{
\label{fig:gradcam_additional}
Additional GradCAM visualizations of two learned heads on $6$ examples from the source dataset of the MNIST-CIFAR task.
These examples sufficiently differentiate the best of the two heads, demonstrating the viability of the source data inspection strategy for the \Dis stage. 
}
\end{figure}

\begin{figure}[t] \centering 
\includegraphics[width=0.7\linewidth]{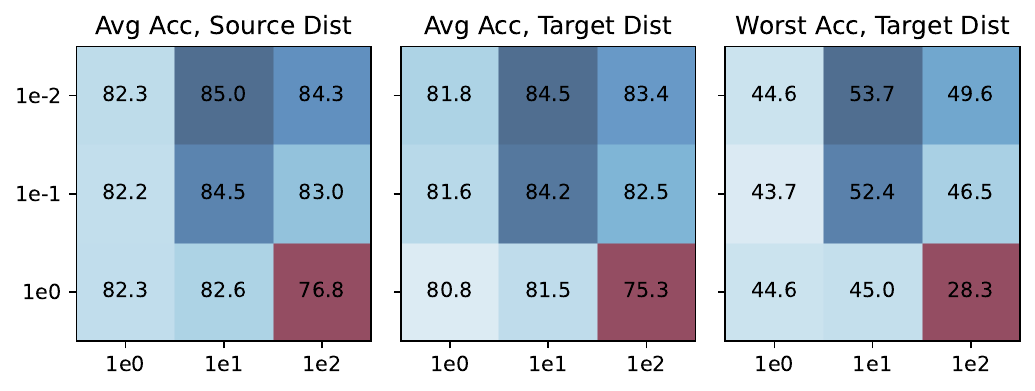}
\caption{\label{fig:celeba_grid} 
Grids for DivDis's two hyperparameters $(\lambda_1, \lambda_2)$ on the CelebA dataset.
Rows indicate $\lambda_1$ and columns indicate $\lambda_2$.
We show three metrics measured with held-out datapoints: average accuracy on the source and target distributions and worst-group accuracy on the target distribution.
We average each metric across three random seeds.
The high correlation between the three metrics indicates that we can tune the hyperparameters of DivDis using only held-out labeled source data.
}
\end{figure}

\begin{table*}[t] \centering
\caption{CelebA dataset with complete correlation between $4$ different pairs of attributes. DivDis outperforms previous methods in all but one setting.} 
\vspace{5pt}
\label{tab:perfect_celeba}
\resizebox{\textwidth}{!}{ 
\begin{tabular}{@{\extracolsep{4pt}}lcccccccc@{\extracolsep{4pt}}}
\toprule
    & \multicolumn{2}{c}{CelebA-CC-1}
    & \multicolumn{2}{c}{CelebA-CC-2}
    & \multicolumn{2}{c}{CelebA-CC-3}
    & \multicolumn{2}{c}{CelebA-CC-4} \\
\cmidrule{2-3} \cmidrule{4-5} \cmidrule{6-7} \cmidrule{8-9}
    & Avg ($\%$) & Worst ($\%$) 
    & Avg ($\%$) & Worst ($\%$) 
    & Avg ($\%$) & Worst ($\%$) 
    & Avg ($\%$) & Worst ($\%$) \\
\midrule
ERM         
    & $ 70.9 \pm 2.0 $ & $ 57.0 \pm 5.8 $
    & $ 73.1 \pm 0.9 $ & $ 41.1 \pm 2.6 $
    & $ 87.0 \pm 0.7 $ & $ 71.9 \pm 2.6 $ 
    & $ 63.9 \pm 3.5 $ & $ 23.0 \pm 1.4 $ 
    \\
JTT         
    & $ 44.6 \pm 1.9 $ & $ 26.5 \pm 1.4 $
    & $ 71.4 \pm 1.9 $ & $ 51.2 \pm 5.4 $
    & $ 64.8 \pm 4.4 $ & $ 34.0 \pm 10.2 $ 
    & $ 67.4 \pm 1.4 $ & $ 49.3 \pm 8.2 $ 
    \\
GDRO         
    & $ 71.6 \pm 0.3 $ & $ 59.3 \pm 2.6 $
    & $ 71.6 \pm 2.4 $ & $ 61.3 \pm 2.3 $
    & $ 88.2 \pm 0.6 $ & $ 83.7 \pm 0.8 $ 
    & $ 65.0 \pm 1.6 $ & $ 21.7 \pm 1.5 $ 
    \\
\midrule
DivDis w/o reg
    & $ 91.0 \pm 0.4 $ & $ 85.9 \pm 1.0 $ 
    & $ 79.7 \pm 0.4 $ & $ 69.3 \pm 1.9 $ 
    & $ 79.5 \pm 0.6 $ & $ 62.0 \pm 2.6 $ 
    & $ 84.7 \pm 0.5 $ & $ 67.4 \pm 1.8 $ 
    \\
DivDis 
    & $ 90.8 \pm 0.4 $ & $ 85.6 \pm 1.1 $ 
    & $ 79.5 \pm 0.2 $ & $ 68.5 \pm 1.7 $
    & $ 80.6 \pm 0.4 $ & $ 67.1 \pm 1.9 $ 
    & $ 84.8 \pm 0.4 $ & $ 73.5 \pm 2.6 $ 
    \\
\bottomrule
\end{tabular}
}
\end{table*}

\begin{figure}[t] \centering
    \includegraphics[width=0.5\linewidth]{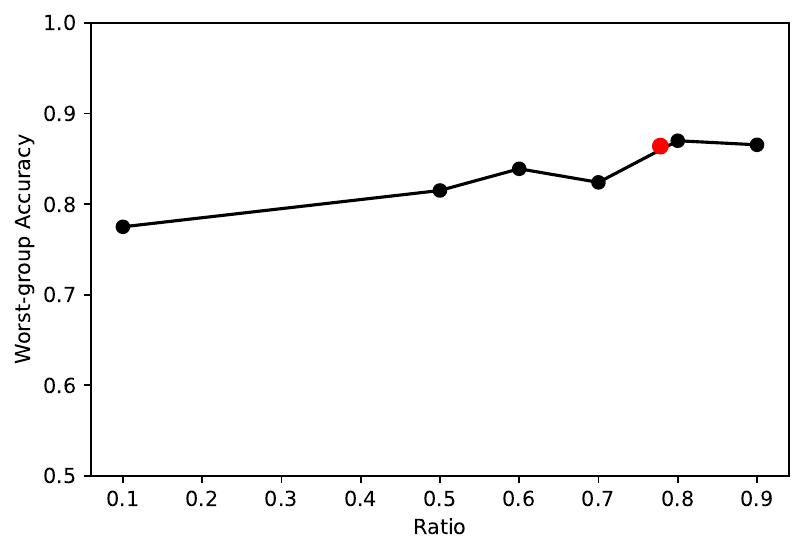}
    \caption{\label{fig:ratio} Worst-group accuracy on the Waterbirds benchmark when using different ratio values for $p(y)$ in the regularizer loss \cref{eq:reg}.
    The plot shows that the performance of DivDis is not very sensitive to this hyperparameter.
    }
\end{figure}

\begin{table} \centering
\caption{CXR dataset test set metrics}
\vspace{5pt}
\begin{tabular}{lcccc}
\toprule
    & Accuracy & AUC 
    & AUC (drain) & AUC (no-drain) \\
\midrule
    ERM 
    & $ 0.883 \pm 0.006 $  & $ 0.828 \pm 0.001 $ 
    & $ 0.904 \pm 0.008 $  & $ 0.717 \pm 0.005 $  \\
    Pseudolabel 
    & $ 0.898 \pm 0.015 $  & $ 0.835 \pm 0.004 $ 
    & $ 0.904 \pm 0.007 $  & $ 0.721 \pm 0.007 $  \\
    DivDis 
    & $ 0.934 \pm 0.014 $  & $ 0.836 \pm 0.007 $ 
    & $ 0.902 \pm 0.006 $  & $ 0.737 \pm 0.001 $  \\
\bottomrule
\end{tabular}
\end{table}

\section{Finite-hypothesis Generalization Bound for Head Selection}
\label{app:gen_bound}
\label{app:sec:proof}
\setcounter{proposition}{0} %

\begin{proposition}
Let the $N$ heads have risk $l_1 \leq l_2 \ldots \leq l_N \in \Real$ on the target dataset, and let $\Delta = l_2 - l_1$.
The required number of i.i.d. labels from the target set to select the best head with probability $\geq 1 - \delta$ is
$m = \frac{2(\log 2N - \log \delta)}{\Delta^2}$.
\end{proposition}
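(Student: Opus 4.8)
This is a sufficiency statement, so the plan is to show that when $m$ equals the displayed quantity, the head with the smallest empirical risk on the $m$ sampled labels coincides, with probability at least $1-\delta$, with the head of smallest true target risk (i.e.\ head $1$). I would first fix notation: let $\hat l_i$ denote the empirical risk of head $i$ on the $m$ i.i.d.\ labeled target points, which is one minus its empirical accuracy, so that selecting the most accurate head is the same as selecting $\argmin_i \hat l_i$. Because the per-example $0$--$1$ loss takes values in $[0,1]$, each $\hat l_i$ is an average of $m$ i.i.d.\ $[0,1]$-valued random variables with mean $l_i$, which is exactly the setting in which Hoeffding's inequality applies.

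The heart of the argument is a ``uniform concentration implies correct selection'' step. I would introduce the good event $E = \{\,|\hat l_i - l_i| < \Delta/2 \ \text{for all}\ i=1,\dots,N\,\}$ and check that $E$ forces the procedure to return head $1$: on $E$ we have $\hat l_1 < l_1 + \Delta/2$, while for every $i \ge 2$, $\hat l_i > l_i - \Delta/2 \ge l_2 - \Delta/2 = l_1 + \Delta/2 > \hat l_1$, so head $1$ is the unique empirical-risk minimizer. (If $\Delta = 0$ the claimed $m$ is infinite and the statement is vacuous, so one may assume $\Delta > 0$.)

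It then remains to show $\Prob(E^c) \le \delta$ under the stated choice of $m$. Applying Hoeffding's inequality to each head gives $\Prob(|\hat l_i - l_i| \ge \Delta/2) \le 2\exp(-2m(\Delta/2)^2) = 2\exp(-m\Delta^2/2)$, and a union bound over the $N$ heads yields $\Prob(E^c) \le 2N\exp(-m\Delta^2/2)$. Substituting $m = \tfrac{2(\log 2N - \log\delta)}{\Delta^2}$ makes the exponent $-(\log 2N - \log\delta)$, so $\Prob(E^c) \le 2N\cdot\tfrac{\delta}{2N} = \delta$, as required.

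I do not anticipate a real obstacle: this is essentially the textbook finite-hypothesis selection bound, and the only care needed is in (i) picking the slack $\Delta/2$ so the concentration events imply the correct ordering, (ii) observing that maximizing accuracy equals minimizing empirical risk and that no tie can occur on $E$, and (iii) the vacuous $\Delta = 0$ case. If one also wished to justify the word ``tight'' by a matching lower bound, that would require a separate two-point (Le Cam-style) construction with two heads whose target risks differ by exactly $\Delta$, showing $\Omega(\Delta^{-2}\log(1/\delta))$ labels are necessary; the proposition as stated, however, needs only the upper bound above.
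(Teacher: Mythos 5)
Your proposal is correct and follows essentially the same route as the paper's proof: Hoeffding's inequality with slack $\Delta/2$ applied to each head, a union bound over the $N$ heads giving $2N\exp(-m\Delta^2/2)$, and solving for $m$. You additionally make explicit (and correctly) the step the paper only gestures at --- that the failure-to-select event is \emph{contained in} the union of deviation events (the paper's text says ``superset,'' which is an inversion) --- and you rightly note the $\Delta=0$ degeneracy and that the stated claim is only the sufficiency direction.
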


\begin{proof}
Given $m$ i.i.d. samples, Hoeffding's inequality gives us for all $\epsilon > 0$,
\begin{align}
    \Prob \left[ l - \hat{l} > \epsilon \right] \leq 2 \exp \left( -2m\epsilon^2 \right).
\end{align}
The event of failing to select the best head is a superset of the following event, for which we can bound the probability as:
\begin{align}
    \Prob \bigg[ 
    \left( \left| l_1 - \widehat{l_1} \right| > \frac{\Delta}{2} \right)
    \vee
    \left( \left| l_2 - \widehat{l_2} \right| > \frac{\Delta}{2} \right)
    \vee
    \ldots
    \vee
    \left( \left| l_N - \widehat{l_N} \right| > \frac{\Delta}{2} \right)
    \bigg]
    \leq 2N \exp \left( -\frac{m\Delta^2}{2} \right).
\end{align}
Solving for $\delta = 2N \exp \left( -\frac{m\Delta^2}{2} \right)$,
we get the sample size bound
\begin{align}
m^* = \frac{2(\log 2N - \log \delta)}{\Delta^2}.
\end{align}
\end{proof}

\end{document}